
\documentclass[journal]{IEEEtran}
%


%

%
\usepackage{cite}

\usepackage{bm}
\usepackage{color}
\usepackage[pdftex]{graphicx}
\usepackage{siunitx}
%
\ifCLASSINFOpdf
\else
\fi
%
%

%
\usepackage{amsmath}
\usepackage{amsthm}
\usepackage{setspace}
\usepackage{multirow}
\usepackage{amssymb}
\usepackage{mathrsfs}
%

%
\usepackage{algorithmic}
\usepackage{algorithm}
\makeatletter
\newcommand{\removelatexerror}{\let\@latex@error\@gobble}
\makeatother

\hyphenation{op-tical net-works semi-conduc-tor}

\begin{document}
%
\title{Conformal Loss-Controlling Prediction}
%
%
%

\author{Di Wang, Ping Wang, Zhong Ji, \textit{Senior Member, IEEE}, Xiaojun Yang, Hongyue Li 
  \thanks{This work has been submitted to the IEEE for possible publication. Copyright may be transferred without notice, after which this version may no longer be accessible.}
  \thanks{This work was supported by the National Natural Science
Foundation of China under Grant 62106169. (Corresponding author: Hongyue Li)}
  \thanks{Di Wang and Zhong Ji are with the School of Electrical and Information Engineering, Tianjin University, Tianjin 300072, China, 
and also with the Tianjin Key Laboratory of Brain-inspired Intelligence Technology, School of Electrical and Information Engineering, Tianjin University, Tianjin 300072, China.
(email: wangdi2015@tju.edu.cn; jizhong@tju.edu.cn;).
 }
  \thanks{Ping Wang and Hongyue Li are with the School of Electrical and Information Engineering, Tianjin University, Tianjin 300072, China.
(email: wangps@tju.edu.cn; lihongyue@tju.edu.cn).
 }
  \thanks{Xiaojun Yang is with the Tianjin Meteorological Observatory, Tianjin 300074, China. (email: boluo0127@yeah.net)}
}
%
%

\markboth{This work has been submitted to the IEEE for possible publication}%
{Shell \MakeLowercase{\textit{et al.}}: Bare Demo of IEEEtran.cls for IEEE Journals}
%




\maketitle

\begin{abstract}
Conformal prediction is a learning framework controlling prediction coverage of prediction sets, which can be built on any learning algorithm for point prediction. This work proposes a learning framework named conformal loss-controlling prediction, which extends conformal prediction to the situation where the value of a loss function needs to be controlled. Different from existing works about risk-controlling prediction sets and conformal risk control with the purpose of controlling the expected values of loss functions, the proposed approach in this paper focuses on the loss for any test object, which is an extension of conformal prediction from miscoverage loss to some general loss. The controlling guarantee is proved under the assumption of exchangeability of data in finite-sample cases and the framework is tested empirically for classification with a class-varying loss and statistical postprocessing of numerical weather forecasting  applications, which are introduced as point-wise classification and point-wise regression problems. All theoretical analysis and experimental results confirm the effectiveness of our loss-controlling approach.
\end{abstract}

\begin{IEEEkeywords}
Conformal prediction, Loss-controlling prediction, Finite-sample guarantee, Weather forecasting.
\end{IEEEkeywords}

%

\section{Introduction}

Prediction sets convey uncertainty or confidence information for users, which is more preferred than prediction points, especially for sensitive applications such as medicine, finance and weather forecasting \cite{balasubramanian2014conformal} \cite{li2019short} \cite{wang2021conformal}. One example is constructing prediction intervals with confidence $1 - \delta$ for regression problems, where the statistical guarantee is expected such that the true labels are covered in probability $1 - \delta$ \cite{morales2023dual}. Nowadays, many researches have been proposed to build set predictors. Bayesian methods \cite{lu2021hierarchical} and Gaussian process \cite{gomez2023adaptive} are straightforward ways of producing prediction sets based on posterior distributions. However, their prediction sets can be misleading if the prior assumptions are not correct, which is often the case since the prior is usually unknown in applications \cite{melluish2001comparing} \cite{papadopoulos2023guaranteed}. Other statistical methods such as bootstrap-based methods \cite{zhang2023long} and quantile regression \cite{koenker1978regression} are also able to output prediction sets for test labels, but their coverage guarantees can only be obtained in the asymptotic setting, and the prediction sets may fail to cover the labels frequently in finite-sample cases. Different from these works, conformal prediction (CP), a promising non-parametric learning framework aiming to provide reliable prediction sets, can provide the finite-sample coverage guarantee only under the assumption of exchangeability of data samples \cite{vovk2005algorithmic}. This property of validity has been proved both theoretically and empirically in many works and applied to many areas \cite{angelopoulos2021gentle} \cite{fontana2023conformal}. Besides, many researches extend CP to more general cases, such as conformal prediction for multi-label learning \cite{wang2015comparison} \cite{messoudi2021copula}, functional data \cite{lei2015conformal} \cite{diquigiovanni2022conformal}, few-shot learning \cite{fisch2021few}, distribution shift \cite{tibshirani2019conformal} \cite{barber2022conformal} and time series \cite{jensen2022ensemble} \cite{zaffran2022adaptive}.

However, the researches about set predictors mentioned above mainly make promise about the coverage of prediction sets, i.e., they only control the miscoverage loss of set predictors, which can not be applied to other broad applications concerning controlling general losses. For example, consider classifying MRI images into several diagnostic categories \cite{bates2021distribution}, where different categories cause different consequence. In this setting, the loss of the true label $y$ being not included in the prediction set should be dependent on $y$, which is the problem of classification with a class-varying loss. Another example is tumor segmentation \cite{angelopoulos2022conformal}. Instead of making prediction sets to overly cover the pixels of tumor, one may care more about controlling other losses such as false negative rate. Other practical settings include controlling the $l_1$ projective distance for protein structure prediction, controlling a hierarchical distance for hierarchical classification and controlling F1-score for open-domain question answering \cite{bates2021distribution} \cite{angelopoulos2022conformal}. In these applications, the prediction sets with the coverage guarantee are not useful, as they are not constructed with controlling these general losses in mind.

To tackle this issue, two works for extending the finite-sample coverage guarantee of CP have been proposed recently. One is the work of conformal prediction sets with limited false positives (CPS-LFP) \cite{fisch2022conformal}. It employs DeepSets \cite{zaheer2017deep} to estimate the expected value or the cumulative distribution function of the number of false positives, and then uses calibration data to control the number of false positives of prediction sets. Conformal risk control (CRC) \cite{angelopoulos2022conformal} extends CP to prediction tasks of controlling the expected value of a general loss based on finding the optimal parameter for nested prediction sets. The spirit is to employ calibration data to obtain the information of the upper bound of the expected value of the loss function at hand and control the expected value for the test object, whose main idea was originally proposed from their pioneer work named risk-controlling prediction sets (RCPS) \cite{bates2021distribution}. CRC and RCPS aim to control the expected value instead of the value of a general loss for set predictors. By contrast, CPS-LFP can control the value of the loss related to false positives, but it is not general enough.

In some applications, controlling the value of a general loss can be more preferred than controlling the expected value, since one may only care about the loss value for a specific test object, just like the coverage guarantee made by CP and the $(k,\delta)$-FP validity acheived by CPS-LFP. Therefore, this paper extends CP to the situation where the value of a general loss needs to be controlled, which has been not considered in the literature to our best knowledge. Our approach is similar to CRC with the main difference being that we focus on finding the optimal parameter for nested prediction sets to control the loss. Therefore, we also concentrate on inductive conformal prediction \cite{papadopoulos2008inductive} or split conformal prediction \cite{lei2018distribution} process like CRC.

Recall that inductive conformal prediction makes the coverage guarantee as follows,
\begin{equation}\nonumber
P \Bigg (Y_{n+1} \in C^{(n)}_{1-\delta}(X_{n+1}) \Bigg ) \geq 1 - \delta,
\end{equation}
where $\delta$ is the significance level preset by users, $C^{(n)}_{1-\delta}$ is the set predictor made by CP based on $n$ calibration data $\{(X_i, Y_i)\}_{i=1}^n$,  $(X_{n+1}, Y_{n+1})$ is the test feature-response pair, and the randomness is from both $\{(X_i, Y_i)\}_{i=1}^n$ and $(X_{n+1}, Y_{n+1})$.
By comparison, conformal loss-controlling prediction (CLCP), the learning framework proposed in this paper, provides the controlling guarantee as follows,
\begin{equation}\nonumber
P \Bigg (L \Big (Y_{n+1}, C_{\lambda^*}(X_{n+1}) \Big ) \leq \alpha \Bigg ) \geq 1 - \delta,
\end{equation}
where $L$ is a loss function satisfying some monotonic conditions as in \cite{angelopoulos2022conformal}, $\alpha$ is the preset level of loss, $C_{\lambda}$ is a set predictor usually constructed by an underlying algorithm and a parameter $\lambda$. The optimal $\lambda^*$ is obtained based on $\alpha$, $\delta$ and calibration data. The controlling guarantee needs two levels $\alpha$ and $\delta$ to be chosen by users, which is similar with that in \cite{bates2021distribution}, i.e., CLCP guarantees that the prediction loss is not greater than $\alpha$ with high probability $1 - \delta$ when $\delta$ is small such as $0.1$. If $L$ is defined based on false positives for multi-label classification, the controlling guarantee above can be seen as the ($\alpha, \delta$)-FP validity defined in Definition 4.2 in \cite{fisch2022conformal}.

We prove the controlling guarantee for distribution-free and finite-sample settings with the assumption of exchangeability of data samples. The main idea is that we find the $\lambda^*$ to make the $1 - \delta$ quantile of the loss values on calibration data not greater than $\alpha$, which is inspired by CRC focusing on making the mean of the loss values not greater than $\alpha$. Since the property of the set predictors and loss functions used in CLCP is the same as that used in CRC, CLCP can also be applied to many applications concerning controlling general losses. These applications include not only the areas about classification and image segmentation, but also the field of graph signal processing \cite{ortega2018graph} \cite{wu2020comprehensive}, for example, protein structure prediction.

The proposed CLCP is a novel learning framework compared to existing researches. Different from those aiming to control the value of the miscoverage loss, CLCP is a more general approach for the purpose of controlling the value of a general loss. Besides, CLCP can be widely used for many situations whereas CPS-LFP is specifically designed for controlling the loss related to false negatives. Also, CLCP differs from CRC and RCPS as their purpose is to control the expected value instead. Therefore, in the experimental section, we concentrate on designing the experiments to verify the theoretical conclusion for different applications, as the idea of controlling general losses for set predictors is original. To be specific, we test our proposed CLCP in classification with a class-varying loss introduced in \cite{bates2021distribution}, and postprocessing of numerical weather forecasts, which we consider as point-wise classification and point-wise regression problems. The experimental results empirically confirm the theoretical guarantee we prove in this paper.

In summary, the main contributions of this paper are:
\begin{itemize}
\item A learning framework named conformal loss-controlling prediction (CLCP) is proposed for controlling the prediction loss for the test object. The approach is simple to implement and can be built on any machine learning algorithm for point prediction.
\item The controlling guarantee is proved mathematically for finite-sample cases with the exchangeability assumption, without any further assumption for data distribution.
\item The controlling guarantee is empirically verified by classification with a class-varying loss and weather forecasting problems, which confirms the effectiveness of CLCP.
\end{itemize}

The rest of this paper is organized as follows. Section II reviews inductive conformal prediction and conformal risk control.  Section III introduces conformal loss-controlling prediction and its theoretical guarantee. Section IV conducts experiments to test the proposed method and the conclusions are drawn in Section V.

\section{Inductive Conformal Prediction and Conformal Risk Control}

This section reviews inductive conformal prediction and conformal risk control. Throughout this paper, $\{(X_i, Y_i)\}_{i = 1}^{n+1}$ denotes $n+1$ data drawn exchangeably from $P_{XY}$ on $\mathcal{X} \times \mathcal{Y}$, where $\{(X_i, Y_i)\}_{i = 1}^{n}$ is the calibration dataset and $(X_{n+1}, Y_{n+1})$ is the test object-response pair. We use lower-case letter $(x_i, y_i)$ to represent the realization of $(X_i, Y_i)$.

The set-valued function and loss function considered in this paper are the same as those in \cite{angelopoulos2022conformal} and \cite{bates2021distribution}, which we formally introduce as follows.
Let $C_{\lambda}: \mathcal{X} \rightarrow \mathcal{Y}'$ be a set-valued function with a parameter $\lambda \in \mathcal{R}$, where $\mathcal{Y}'$ represents some space of sets and $\mathcal{R}$ is the set of real numbers. Taking single-label classification for example, $\mathcal{Y}'$ can be the power set of $\mathcal{Y}$. For binary image segmentation, $\mathcal{Y}'$ can be equal to $\mathcal{Y}$ as the space of all possible results of image segmentation, where the sets here stand for all of the pixels of positive class for the image.

We also introduce the nesting property for prediction sets and losses as in \cite{bates2021distribution} as follows. For each realization of input object $x$, we assume that $C_{\lambda}(x)$ satisfies the following nesting property:
\begin{equation}
\lambda_1 < \lambda_2 \ \Longrightarrow\  C_{\lambda_1}(x) \subseteq C_{\lambda_2}(x).
\end{equation}
Furthermore, with $S_1$ and $S_2$ being two subsets of $\mathcal{Y}$, we assume that $L: \mathcal{Y} \times \mathcal{Y}' \rightarrow \mathcal{R}$ is a loss function respecting the following nesting property for each realization of response $y$:
\begin{equation}
S_1 \subseteq S_2 \subseteq \mathcal{Y}' \ \Longrightarrow \ L(y, S_2) \leq L(y, S_1) \leq B,
\end{equation}
where $B$ is the upper bound of  the loss function.

\subsection{Inductive Conformal Prediction}

Inductive conformal prediction (ICP) is a computationally efficient version of the original conformal prediction approach. It starts with any measurable function named nonconformity measure $A: \mathcal{X} \times \mathcal{Y} \rightarrow \mathcal{R}$ and obtains $n$ nonconformity scores as
\begin{equation}\nonumber
A_i = A(X_i, Y_i),
\end{equation}
for $i = 1, \cdots, n$.
Then, with the exchangeable assumption and a preset $\delta \in (0,1)$, one can conclude that
\begin{equation}\nonumber
P \Bigg ( A(X_{n+1}, Y_{n+1}) \leq Q^{(n)}_{1-\delta} \Bigg ) \geq 1 - \delta,
\end{equation}
where $Q^{(n)}_{1-\delta}$ is the $1 - \delta$ quantile of $\{A_i\}_{i=1}^n \cup \{\infty\}$ \cite{tibshirani2019conformal}. Therefore, the prediction set made by ICP is
\begin{equation}\nonumber
C^{(n)}_{1-\delta}(X_{n+1}) = \{ y : A(X_{n+1}, y) \leq Q^{(n)}_{1-\delta}\},
\end{equation}
which satisfies
\begin{equation}\nonumber
P \Bigg (Y_{n+1} \in C^{(n)}_{1-\delta}(X_{n+1}) \Bigg ) \geq 1 - \delta.
\end{equation}

The nonconformity measure $A$ is often defined based on a point prediction model $\hat{f}$ learned from some other training samples, each of which is also drawn from $P_{XY}$.

Here is an example of constructing prediction sets with CP. For a classification problem with $K$ classes, one can first train a classifier $\hat{f}: \mathcal{X} \rightarrow [0,1]^K$ with the $i$th output being the estimation of the probability of the $i$th class, and calculate the nonconformity scores as
\begin{equation}\nonumber
A(x, y) = 1 -\hat{f}_k(x),
\end{equation}
where $\hat{f}_k$ is the $k$th output of $\hat{f}(x)$, if $y$ stands for the $k$th class. Therefore, the corresponding prediction set for an input object $x$ is
\begin{equation}\nonumber
C^{(n)}_{1-\delta}(x) = \{ k : \hat{f}_k(x) \geq 1 - Q^{(n)}_{1-\delta}\},
\end{equation}
which indicates that $k \in C^{(n)}_{1-\delta}(x)$ if the estimated probability of $k$th class is not less than $1 - Q^{(n)}_{1-\delta}$.

\subsection{Conformal Risk Control}

Different from conformal prediction, CRC starts with a set-valued function with the nesting property, whose approach is inspired by nested conformal prediction \cite{gupta2022nested} and was first proposed in the researches about risk-controlling prediction sets.

Assume one has a way of constructing a set-valued function $C_{\lambda}$ with the nesting property of formula (1). Given a loss function $L$ with the nesting property of formula (2), the purpose of CRC is to find $\lambda^*$ such that
\begin{equation}
E \Big [L(Y_{n+1}, C_{\lambda^*}(X_{n+1})) \Big ] \leq \alpha,
\end{equation}
i.e., the expected loss or the risk is not greater than $\alpha$.

To do so, CRC \textbf{first} calculates $L_i(\lambda)$ as
\begin{equation}
L_i(\lambda) = L(Y_i, C_{\lambda}(X_i)),
\end{equation}
with the fact that $L_i(\lambda)$ is a monotone decreasing function of $\lambda$ based on the nesting properties. \textbf{Then}, CRC searches for $\lambda^*$ using the following equation,
\begin{equation}\nonumber
\lambda^* = \inf \Bigg \{ \lambda: \frac{n}{n+1}\hat{R}_n(\lambda) + \frac{B}{n+1} \leq \alpha    \Bigg \},
\end{equation}
where $\hat{R}_n(\lambda) = (L_1(\lambda)+\cdots+L_n(\lambda))/n $ is an estimation of the risk on calibration data and $B$ is introduced to make the estimation not overconfident. 

These two steps of CRC are too simple that one may surprise about its theoretical conclusion that with the assumption of exchangeability of data samples, the prediction set
$ C_{\lambda^*}(X_{n+1})$ obtained by CRC satisfies formula (3), which has been also proved empirically in \cite{angelopoulos2022conformal}. CRC extends CP from controlling the expected value of miscoverage loss to some general loss, which can be applied to the cases where $\mathcal{Y}$ is beyond real numbers or vectors, such as images, fields and even graphs.

After tackling the theoretical issue, the problem for CRC is how to construct  $C_{\lambda}$. Here, we also give an example of a classification problem with $K$ classes. In fact, with the same notations of the example in Section II-A, CRC can construct the prediction set as
\begin{equation}\nonumber
C_{\lambda}(x) = \{ k : \hat{f}_k(x) \geq 1 - \lambda\}.
\end{equation}
Therefore, as long as $L$ satisfies formula (2), such as $L$ is the indicator of miscoverage, CRC guarantees to control the risk as formula (3).

\section{Conformal Loss-Controlling Prediction And Its Theoretical Analysis}

This section introduces the approach of CLCP and its theoretical analysis. CLCP also has two steps like CRC, and the main difference between them is that CLCP focuses on whether the estimation of the $1 - \delta$ quantile of the losses is not greater than $\alpha$ while CRC concentrates on whether the mean of the losses not greater than $\alpha$. The controlling of the $1 - \delta$ quantile of the losses makes CLCP able to control the value of a general loss by employing the probability inequation derived from the exchangeability assumption, which is also employed by ICP if the loss is seen as the nonconformity score.

Suppose one has a way of constructing a set-valued function $C_{\lambda}$ with the nesting property of formula (1), which can be the same as that used in CRC. Here, we assume that the parameter $\lambda$ is selected from a discrete set $\Lambda$, such as from $0$ to $1$ with a step size $0.01$, which avoids us from the assumption of right continuous for the loss function in theoretical analysis, and is also reasonable since we actually search for $\lambda^*$ with some step size in practice \cite{angelopoulos2022conformal}  \cite{bates2021distribution}. Besides, the latest paper about risk-controlling prediction also makes this discrete assumption for general cases \cite{angelopoulos2021learn}. After determining $C_{\lambda}$  and $\Lambda$, CLCP \textbf{first} calculates $L_i(\lambda)$ on calibration data as formula (4). \textbf{Then}, for any preset $\alpha \in \mathcal{R}$ and $\delta \in (0, 1)$, CLCP searches for $\lambda^*$ such that
\begin{equation}
\lambda^* = \min \Bigg \{ \lambda \in \Lambda: Q^{(n)}_{1-\delta}(\lambda) \leq \alpha    \Bigg \},
\end{equation}
with $Q^{(n)}_{1-\delta}(\lambda)$ being the $1 - \delta$ quantile of $\{L_i(\lambda)\}_{i=1}^n \cup \{ B \}$. The approach of CLCP is summarised in Algorithm 1, which is easy to implement.
\begin{algorithm}
\caption{Conformal Loss-Controlling Prediction}
\label{alg:Framwork}
\begin{algorithmic}[1]
\REQUIRE ~~\\
Calibration dataset $\{(x_i, y_i)\}_{i=1}^n$, test input object $x_{n+1}$, the set predictor $C_{\lambda}$ satisfying formula (1), the loss function $L$ satisfying formula (2), preset $\alpha \in \mathcal{R}$ and $\delta \in (0, 1)$.\\
\ENSURE ~~\\
Predictive set for $y_{n+1}$.
\STATE
Based on formula (4), calculate $\{L_i(\lambda)\}_{i=1}^n$.
\STATE
Search for $\lambda^*$ satisfying formula (5).
\RETURN
$C_{\lambda^*}(x_{n+1})$
\end{algorithmic}
\end{algorithm}

Next, we introduce the definition of $(\alpha, \delta)$-loss-controlling set predictors and then prove our theoretical conclusion about CLCP.

\newtheorem{definition}{Definition}
\begin{definition}
Given a loss function $L: \mathcal{Y} \times \mathcal{Y}' \rightarrow \mathcal{R}$ and a random sample $(X, Y) \in \mathcal{X} \times \mathcal{Y}$, a random set-valued function $C$ whose realization is in the space of  functions $\mathcal{X} \rightarrow \mathcal{Y}'$ is a $(\alpha, \delta)$-loss-controlling set predictor if it satisfies that
\[
P \Bigg (L \Big (Y, C(X) \Big ) \leq \alpha \Bigg ) \geq 1 - \delta,
\]
where the randomness is both from $C$ and $(X, Y)$.
\end{definition}

After all these preparations, we can prove in Theorem 1 that $C_{\lambda^*}$ constructed by CLCP is a $(\alpha, \delta)$-loss-controlling set predictor.

\newtheorem{theorem}{Theorem}
\begin{theorem}
Suppose $\{(X_i, Y_i)\}_{i = 1}^{n+1}$ are $n+1$ data drawn exchangeably from $P_{XY}$ on $\mathcal{X} \times \mathcal{Y}$, $C_{\lambda}: \mathcal{X} \rightarrow \mathcal{Y}'$ is a set-valued function satisfying formula (1) with the parameter $\lambda$ taking values from a discrete set $ \Lambda \subset \mathcal{R}$ , $L: \mathcal{Y} \times \mathcal{Y}' \rightarrow \mathcal{R}$ is a loss function satisfying formula (2) and $L_i(\lambda)$ is defined as formula (4). For any preset $\alpha \in \mathcal{R}$, if $L$ also satisfies the following condition,
\begin{equation}
\min_{\lambda}\max_i L_i(\lambda) \leq \alpha,
\end{equation}
then for any $\delta \in (\frac{1}{n+1},1)$, we have
\begin{equation}
P \Bigg (L \Big (Y_{n+1}, C_{\lambda^*}(X_{n+1}) \Big ) \leq \alpha \Bigg ) \geq 1 - \delta,
\end{equation}
where $\lambda^*$ is defined as formula (5).
\end{theorem}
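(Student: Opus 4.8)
The plan is to reduce the data-dependent threshold $\lambda^*$ to a symmetric ``oracle'' threshold computed from all $n+1$ points, so that the standard rank argument behind the ICP quantile inequality applies. The obstacle is precisely that $\lambda^*$ in formula (5) depends on the calibration sample, so I cannot treat $L_{n+1}(\lambda^*)$ as a score for a \emph{fixed} $\lambda$ and quote the ICP inequality directly. I would circumvent this by introducing $\tilde{\lambda} = \min\{\lambda \in \Lambda : \tilde{Q}_{1-\delta}(\lambda) \leq \alpha\}$, where $\tilde{Q}_{1-\delta}(\lambda)$ is the $1-\delta$ quantile (the $\lceil(1-\delta)(n+1)\rceil$-th smallest value) of the full multiset $\{L_i(\lambda)\}_{i=1}^{n+1}$, i.e.\ the test loss $L_{n+1}(\lambda)$ takes the place of the padding value $B$. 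Condition (6) guarantees that the defining sets of both $\lambda^*$ and $\tilde{\lambda}$ are nonempty, so both minima are well defined.

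The first step is the deterministic comparison $\lambda^* \geq \tilde{\lambda}$. For every $\lambda$, the multisets defining $Q^{(n)}_{1-\delta}(\lambda)$ and $\tilde{Q}_{1-\delta}(\lambda)$ share the $n$ calibration losses and differ only in a single entry, which is $B$ for the former and $L_{n+1}(\lambda)$ for the latter. Since formula (2) gives $L_{n+1}(\lambda) \leq B$, replacing $B$ by the smaller value $L_{n+1}(\lambda)$ cannot raise any order statistic, so $\tilde{Q}_{1-\delta}(\lambda) \leq Q^{(n)}_{1-\delta}(\lambda)$. Hence $\{\lambda : Q^{(n)}_{1-\delta}(\lambda)\leq\alpha\} \subseteq \{\lambda : \tilde{Q}_{1-\delta}(\lambda)\leq\alpha\}$, and taking minima over a subset gives $\tilde{\lambda} \leq \lambda^*$. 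Because the nesting properties (1)--(2) make $L_{n+1}(\cdot)$ monotone decreasing, $\lambda^* \geq \tilde{\lambda}$ yields $L_{n+1}(\lambda^*) \leq L_{n+1}(\tilde{\lambda})$, so it suffices to prove $P(L_{n+1}(\tilde{\lambda}) \leq \alpha) \geq 1-\delta$.

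The second step is the exchangeability argument for $\tilde{\lambda}$. Since $\tilde{\lambda}$ is a symmetric function of the unordered collection $\{(X_i,Y_i)\}_{i=1}^{n+1}$, I would condition on this collection. By the definition of $\tilde{\lambda}$ the $\lceil(1-\delta)(n+1)\rceil$-th smallest of $\{L_i(\tilde{\lambda})\}_{i=1}^{n+1}$ is $\leq\alpha$, so at least $\lceil(1-\delta)(n+1)\rceil$ of these $n+1$ losses are $\leq\alpha$. Exchangeability makes the index of the test point uniform over $\{1,\dots,n+1\}$ conditionally on the collection, so the conditional probability that $L_{n+1}(\tilde{\lambda})\leq\alpha$ is at least $\lceil(1-\delta)(n+1)\rceil/(n+1) \geq 1-\delta$; averaging over the collection removes the conditioning.

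Combining the two steps gives $P(L_{n+1}(\lambda^*)\leq\alpha) \geq P(L_{n+1}(\tilde{\lambda})\leq\alpha) \geq 1-\delta$, which is formula (7). I expect the main obstacle to be the bookkeeping in the first step: justifying $\lambda^* \geq \tilde{\lambda}$ cleanly hinges on the padding-by-$B$ design of formula (5) together with coordinatewise monotonicity of order statistics, and this is exactly where the bound $L_{n+1}(\lambda) \leq B$ from formula (2) is indispensable. The hypothesis $\delta > \frac{1}{n+1}$ plays the supporting role of forcing the quantile index $\lceil(1-\delta)(n+1)\rceil$ to be at most $n$, so that the quantile is never pinned to the padding value $B$ and condition (6) can genuinely make the defining set of $\lambda^*$ nonempty.
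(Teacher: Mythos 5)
Your proposal is correct and follows essentially the same route as the paper's own proof: you introduce the same oracle threshold $\tilde{\lambda}$ defined from the $1-\delta$ quantile of all $n+1$ losses, deduce $\tilde{\lambda} \leq \lambda^*$ from the padding value $B$ being an upper bound on $L_{n+1}(\lambda)$, pass to $L_{n+1}(\lambda^*) \leq L_{n+1}(\tilde{\lambda})$ by the nesting properties, and finish with the exchangeability/rank argument. Your write-up is in fact somewhat more explicit than the paper's (the order-statistic comparison justifying $\tilde{\lambda}\leq\lambda^*$ and the role of $\delta>\frac{1}{n+1}$ in keeping the quantile index at most $n$), but there is no substantive difference in approach.
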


\begin{proof}
Let $Q_{1-\delta}^{(n+1)}(\lambda)$ be the  $1-\delta$ quantile of $\{L_i(\lambda)\}_{i=1}^{n+1}$, and
define $\tilde{\lambda}$ as
\begin{equation}\nonumber
\tilde{\lambda} = \min \Bigg  \{ \lambda \in \Lambda: Q_{1-\delta}^{(n+1)}(\lambda) \leq \alpha \Bigg  \}.
\end{equation}
Similarly, let $Q_{1-\delta}^{(n)}(\lambda)$ be the  $1-\delta$ quantile of $\{L_i(\lambda)\}_{i=1}^{n} \cup \{ B \}$, and
we have
\begin{equation}\nonumber
\lambda^* = \min \Bigg \{ \lambda \in \Lambda: Q^{(n)}_{1-\delta}(\lambda) \leq \alpha    \Bigg \}.
\end{equation}
As $\delta \in (\frac{1}{n+1}, 1)$ and formula (6) holds, $\tilde{\lambda}$ and  $\lambda^*$ are well defined.
Since $B$ is the upper bound of $L_{n+1}(\lambda)$, by definition, we have
\[
\tilde{\lambda} \leq \lambda^*,
\]
which leads to
\begin{equation}
L_{n+1}(\lambda^*) \leq L_{n+1}(\tilde{\lambda}),
\end{equation}
as $C_{\lambda}$ and $L$ satisfy the nesting properties of formula (1) and (2).

Since $\tilde{\lambda}$ is dependent on the whole dataset $\{(X_i, Y_i)\}_{i = 1}^{n+1}$, $\{L_i(\tilde{\lambda})\}_{i=1}^{n+1}$ are exchangeable variables, which leads to
\begin{equation}
P \Bigg (L_{n+1}(\tilde{\lambda}) \leq Q_{1-\delta}^{(n+1)}(\tilde{\lambda})  \Bigg ) \geq 1 - \delta,
\end{equation}
as $Q_{1-\delta}^{(n+1)}(\tilde{\lambda})$ is just the corresponding $1 - \delta$ quantile (See the proof of Lemma 1 in \cite{tibshirani2019conformal}).

Combining the definition of $\tilde{\lambda}$, formula (8) and (9), we have
\begin{equation}\nonumber
P \Bigg (L_{n+1}(\lambda^*) \leq \alpha \Bigg ) \geq 1 - \delta,
\end{equation}
which completes the proof.
\end{proof}

At the end of this section, we show that CP can be seen as a special case of CLCP from the following viewpoint. Suppose $C_{\lambda}$ is constructed by a nonconformity score $A$, which is defined as
\begin{equation}\nonumber
C_{\lambda}(x) = \{y : A(x, y) \leq \lambda\},
\end{equation}
and  $L$ is the miscoverage loss such that
\begin{equation}\nonumber
L_i(\lambda) = L(y_i, C_{\lambda}(x_i)) = \mathbb{I}\{y_i \notin C_{\lambda}(x_i)\},
\end{equation}
where $\mathbb{I}$ is the indicator function. In this case, $Q^{(n)}_{1-\delta}(\lambda)$ can only be $0$ or $1$ as the loss can only be these two numbers. Besides, only $\alpha \in [0, 1)$ is meaningful, which means that one wants to control the miscoverage. For CLCP, let $\Lambda$ be an arithmetic sequence whose common difference,  minimum and maximum are $\Delta$, $\lambda_{min}$ and $\lambda_{max}$ respectively and set $\alpha = 0$. By definition, $\lambda^*$ can be written as
\begin{align}
\lambda^* & = \min \Bigg \{ \lambda \in \Lambda: \frac{1}{n+1}\sum_{i=1}^{n} \mathbb{I}\{a_i \leq \lambda \} \geq 1 - \delta  \Bigg \} \nonumber \\
&  = \min \Bigg \{ \lambda \in \Lambda: \frac{1}{n}\sum_{i=1}^{n} \mathbb{I}\{a_i \leq \lambda \} \geq \frac{\lceil (1 - \delta)(n+1) \rceil}{n}  \Bigg \}, \nonumber
\end{align}
where $a_i = A(x_i, y_i)$ is the nonconformity score of the $i$th calibration data for CP.
In comparison, referring to \cite{angelopoulos2022conformal}, the optimal $\hat{\lambda}$ for CP is
\begin{align}
\hat{\lambda} = \inf \Bigg \{ \lambda \in \mathcal{R}: \frac{1}{n}\sum_{i=1}^{n} \mathbb{I}\{a_i \leq \lambda \} \geq \frac{\lceil (1 - \delta)(n+1) \rceil}{n}  \Bigg \}. \nonumber
\end{align}
Therefore, if $\lambda_{min} < a_i < \lambda_{max}$ for each $i$, we have
\begin{equation}\nonumber
|\lambda^* - \hat{\lambda}| \leq \Delta,
\end{equation}
which implies that the prediction sets of CP and CLCP are nearly the same if $\Delta$ is small enough. In summary, if $C_{\lambda}$ and $L$ have special forms and $\Lambda$ includes the upper and lower bounds of nonconformity scores with $\Delta$ being small enough to be ignored, CP can be seen as a special case of CLCP.

\section{Experiments}

This section conducts the experiments to empirically test the approach of CLCP. First, we build CLCP for the classification problem with a class-varying loss introduced in \cite{bates2021distribution}. Then, we focus on two types of weather forecasting applications, which can be seen as point-wise classification and point-wise regression problems respectively. All experiments were coded in Python \cite{citepython}. The statistical learning methods used in Section IV-A were implemented using Scikit-learn \cite{scikit-learn} and the deep learning methods used in Section IV-B and Section IV-C were implemented with Pytorch \cite{NEURIPS2019_9015}.

\subsection{CLCP for classification with a class-varying loss}

We collected $20$ binary or multiclass classification datasets from UCI repositories \cite{asuncion2007uci} whose information is summarized in Table I. The problem is to make the prediction sets of labels controlling the following loss
\begin{equation}\nonumber
L(y, C) = L_y \mathbb{I} \{y \notin C\},
\end{equation}
where $L_y$ is the loss for $y$ being not in the prediction set $C$. The loss for each label is generated uniformly on $(0,1)$ like \cite{bates2021distribution}. Support vector machine (SVM) \cite{platt1999probabilistic}, neural network (NN) \cite{schmidhuber2015deep} and random forests (RF) \cite{breiman2001random} were employed as the underlying algorithms separately to construct prediction sets based on CLCP. The prediction set $C_{\lambda}$ is constructed as
\begin{equation}\nonumber
C_{\lambda}(x) = \{ k : \hat{f}_k(x) \geq 1 - \lambda\},
\end{equation}
where $\hat{f}_k$ is the estimated probability of the observation being $k$th class by the corresponding underlying algorithm. For each dataset, we used $20\%$ of the data for testing and $80\%$ and $20\%$ of the remaining data for training and calibration respectively. Based on the training data, we selected the meta-parameters with three-fold cross-validation and used the optimal meta-parameters to train the classifiers. The regularization parameter of SVM was selected from $\{0.001, 0.01, 0.1, 1, 10, 100\}$, and the learning rate and the epochs of NN were selected from $\{0.001, 0.0001\}$ and $\{200, 500, 1000\}$. The number of trees of RF were selected from $\{100, 300, 500\}$ and the partition criterion was either gini or entropy. After training, we used the trained classifiers and the calibration data to search for $\lambda^*$ with Algorithm 1 and construct the final set predictors. All of the features were normalized to $[0, 1]$ by min–max normalization and for each dataset, the experiments were conducted $10$ times and the average results were recorded.

The bar plots in Fig. 1 and Fig. 2 show the experimental results for $20$ public datasets with $\delta \in \{0.05, 0.1, 0.15, 0.2\}$ and $\alpha \in \{0.1, 0.2\}$. The results in Fig. 1 concern about the frequency of the prediction losses being greater than $\alpha$ on test set, which is the estimated probability of
\begin{equation}\nonumber
P \Bigg (L \Big (Y_{n+1}, C_{\lambda^*}(X_{n+1}) \Big ) > \alpha \Bigg ),
\end{equation}
and should be near or lower than $\delta$ empirically due to formula (7).
The bar plots of Fig. 1 demonstrate that the frequency of the prediction losses being greater than $\alpha$ is near or below $\delta$, which verifies the conclusion of Theorem 1.

The bar plots of Fig. 2 show the average sizes of prediction sets for different $\delta$, describing the informational efficiency of the prediction sets. Changing $\delta$ can effectively change the average size of prediction sets and changing $\alpha$ may slightly change average size (such as the results for wine-quality-red). Although many prediction sets are meaningful with average sizes being near $1$, the prediction sets for the dataset contrac may be not useful, since no matter how to change $\delta$ and $\alpha$, the average sizes of the prediction sets are all near or above $2$, whereas the number of classes of contrac is $3$. Thus, how to construct efficient prediction sets in the learning framework of CLCP is worth exploring for further researches.

Combining Fig. 1 and Fig. 2, we observe that different classifiers can perform differently for different datasets, which indicates that the underlying algorithm affects the performance and the model selection approach is necessary for CLCP.

\begin{table}
\centering
\caption{Datasets from UCI Repositories}
\scalebox{0.7}{
\begin{tabular}{lcccc}
\hline
Dataset & Examples & Dimensionality &  Classes \\
\hline
bc-wisc-diag & 569 & 30 & 2 \\
car & 1728 & 6 & 4 \\
chess-kr-kp & 3196 & 36 & 2 \\
contrac & 1473 & 9 & 3 \\
credit-a & 690 & 15 & 2 \\
credit-g & 1000 & 20 & 2 \\
ctg-10classes & 2126 & 21 & 10 \\
ctg-3classes & 2126 & 21 & 3 \\
haberman & 306 & 3 & 2 \\
optical & 5620 & 62 & 10 \\
phishing-web & 11055 & 30 & 2 \\
st-image & 2310 & 18 & 7 \\
st-landsat & 6435 & 36 & 6 \\
tic-tac-toe & 958 & 9 &  2 \\
wall-following & 5456 & 24 &  4\\
waveform & 5000& 21&  3\\
waveform-noise & 5000 & 40 &  3\\
wilt & 4839 & 5 &  2\\
wine-quality-red & 1599 & 11 &  6\\
wine-quality-white & 4898 & 11 &  7 \\
\hline
\end{tabular}}
\end{table}

\begin{figure*}[h]
\centering
\includegraphics[width = 0.75 \hsize]{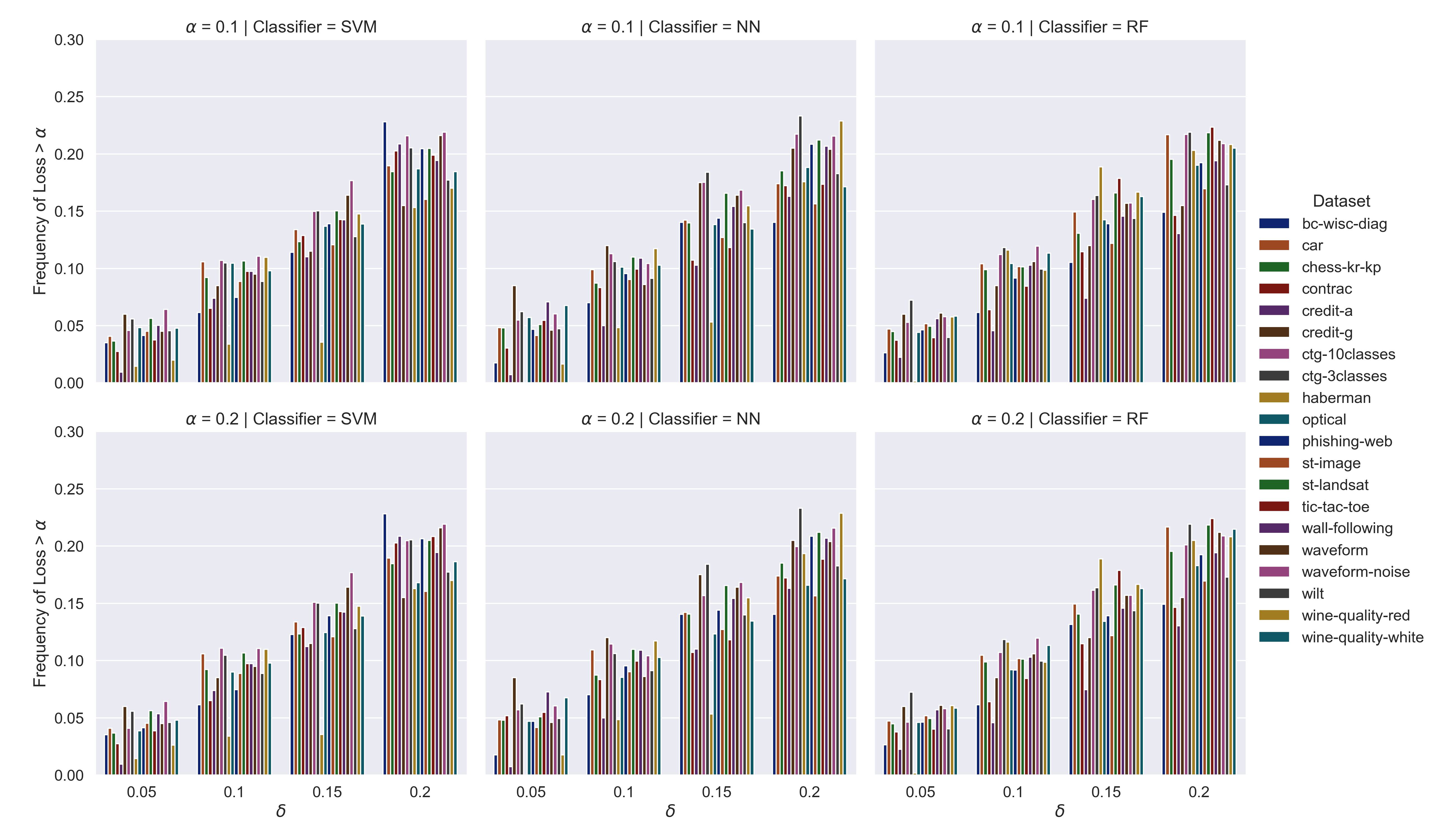}
\caption{Bar plots of the frequencies of the prediction losses being greater than $\alpha$ vs. $\delta = 0.05, 0.1, 0.15, 0.2$ on test data for classification with a class-varying loss. The first row corresponds to $\alpha = 0.1$ and the second row corresponds to $\alpha = 0.2$. Different columns represent different classifiers. All bars are near or below the preset $\delta$, which confirms the controlling guarantee of CLCP empirically.}
\end{figure*}

\begin{figure*}[h]
\centering
\includegraphics[width = 0.75 \hsize]{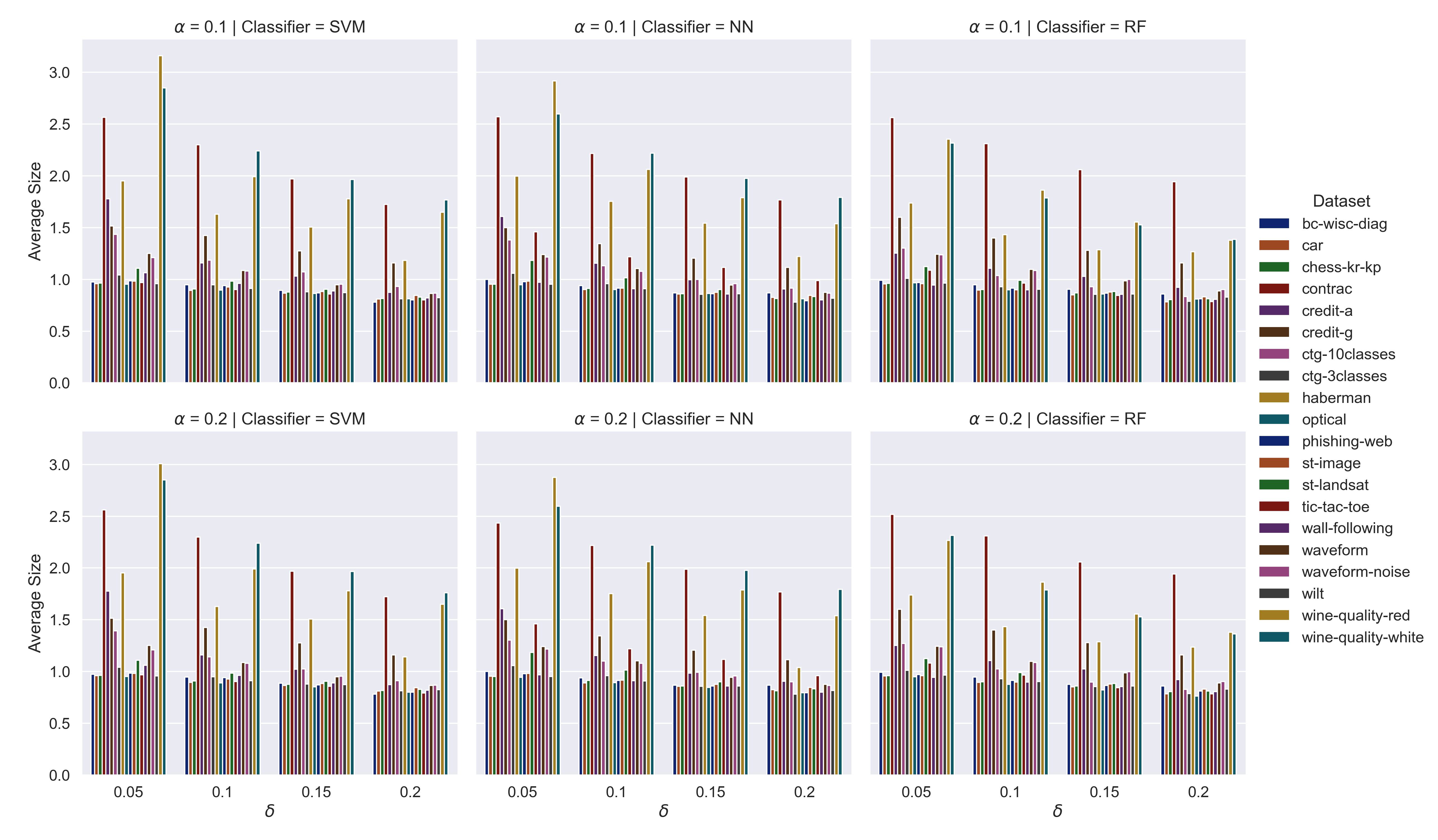}
\caption{Bar plots of the average sizes of prediction sets vs. $\delta = 0.05, 0.1, 0.15, 0.2$ on test data for classification with a class-varying loss. The first row corresponds to $\alpha = 0.1$ and the second row corresponds to $\alpha = 0.2$. Different columns represent different classifiers. The plots demonstrate the information in prediction sets. In general, large $\delta$ leads to small average size and different classifiers have different informational efficiency. }
\end{figure*}

\subsection{CLCP for high-impact weather forecasting}

\begin{figure*}[h]
\centering
\includegraphics[width = 0.75 \hsize]{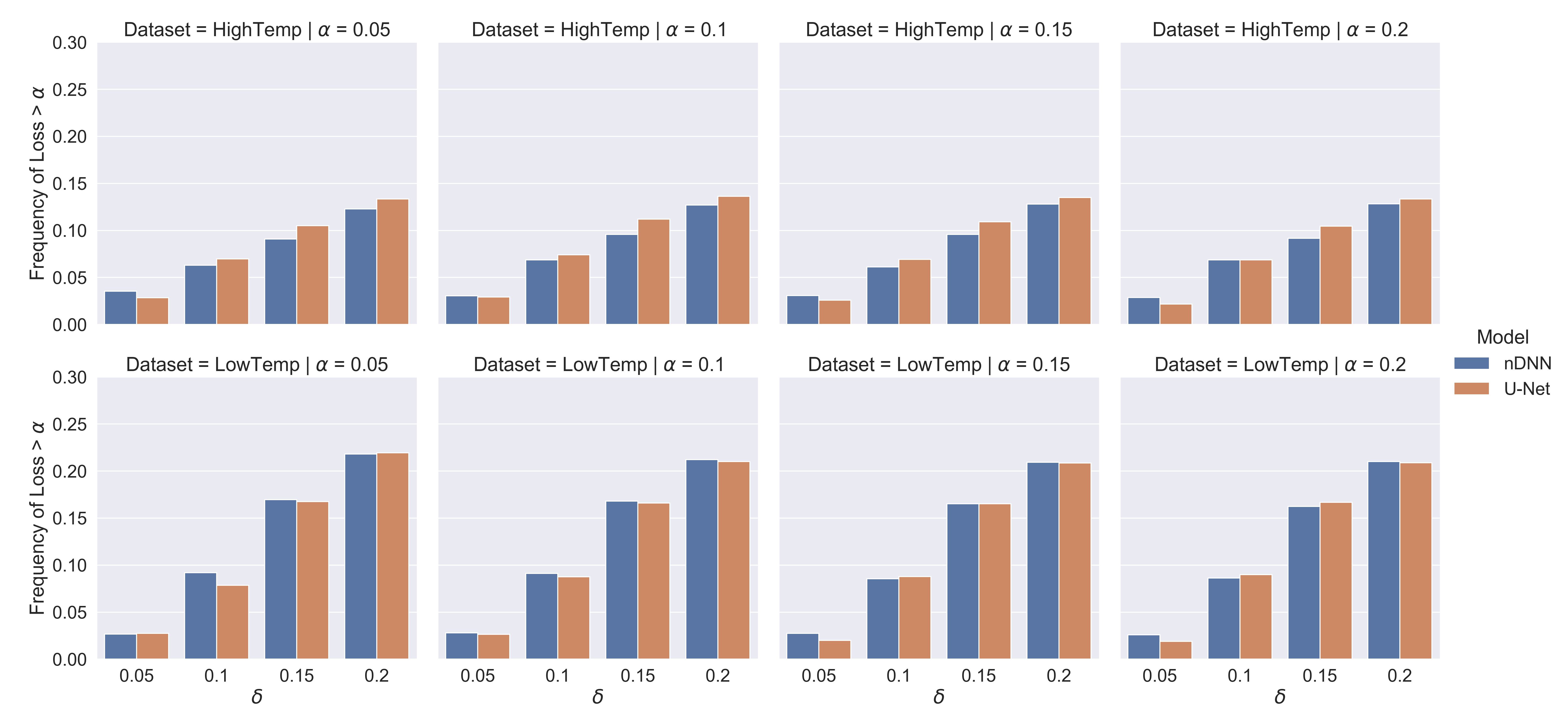}
\caption{Bar plots of the frequencies of the prediction losses being greater than $\alpha$ vs. $\delta = 0.05, 0.1, 0.15, 0.2$ on test data for high-impact weather forecasting. The first row corresponds to HighTemp and the second row corresponds to LowTemp. Different columns represent different $\alpha$. All bars are near or below the preset $\delta$, which confirms the controlling guarantee of CLCP empirically.}
\end{figure*}

\begin{figure*}[h]
\centering
\includegraphics[width = 0.75 \hsize]{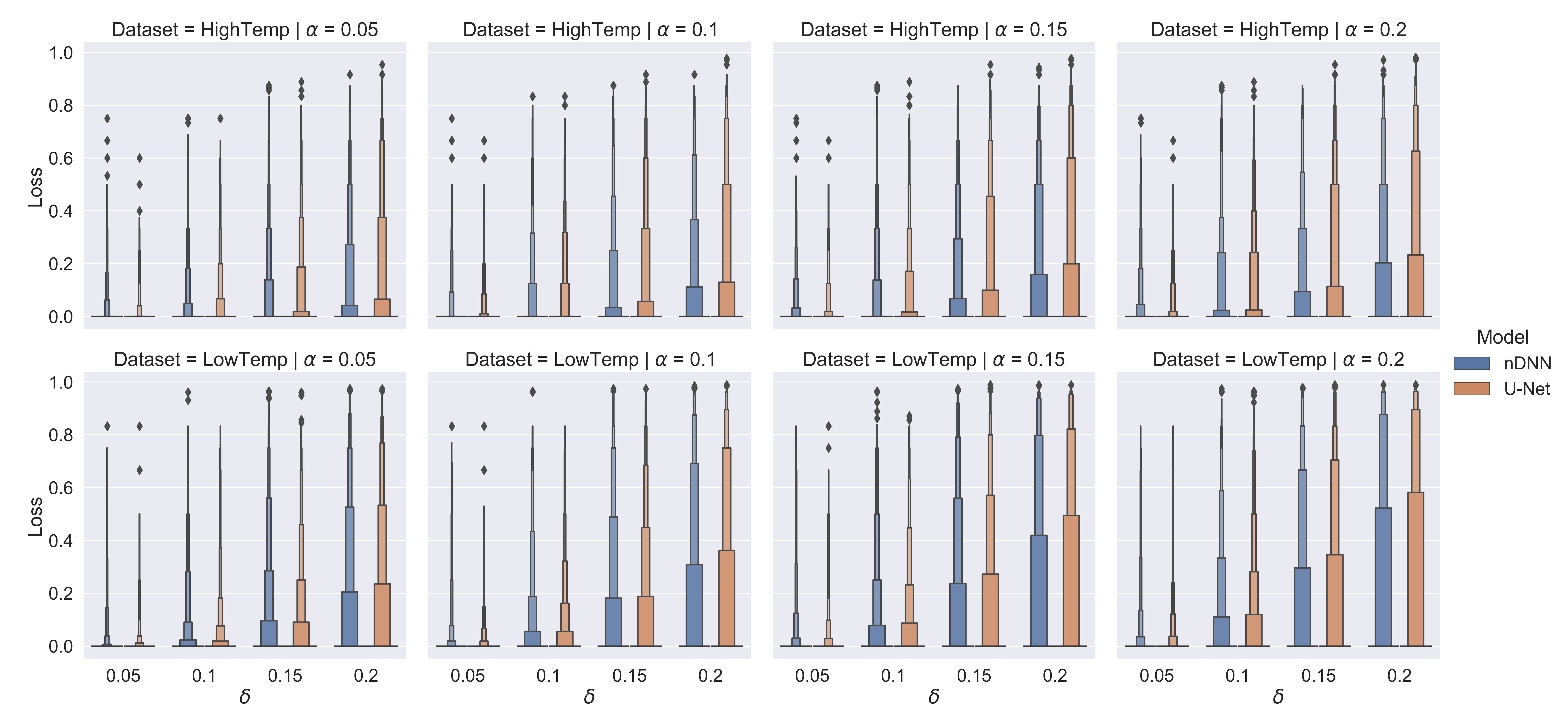}
\caption{Boxen plots of the prediction losses vs. $\delta = 0.05, 0.1, 0.15, 0.2$ on test data for high-impact weather forecasting. The first row corresponds to HighTemp and the second row corresponds to LowTemp. Different columns represent different $\alpha$. The loss distributions are controlled by $\alpha$ and $\delta$ properly to obtain the empirical validity in Fig. 3.}
\end{figure*}

\begin{figure*}[h]
\centering
\includegraphics[width = 0.75 \hsize]{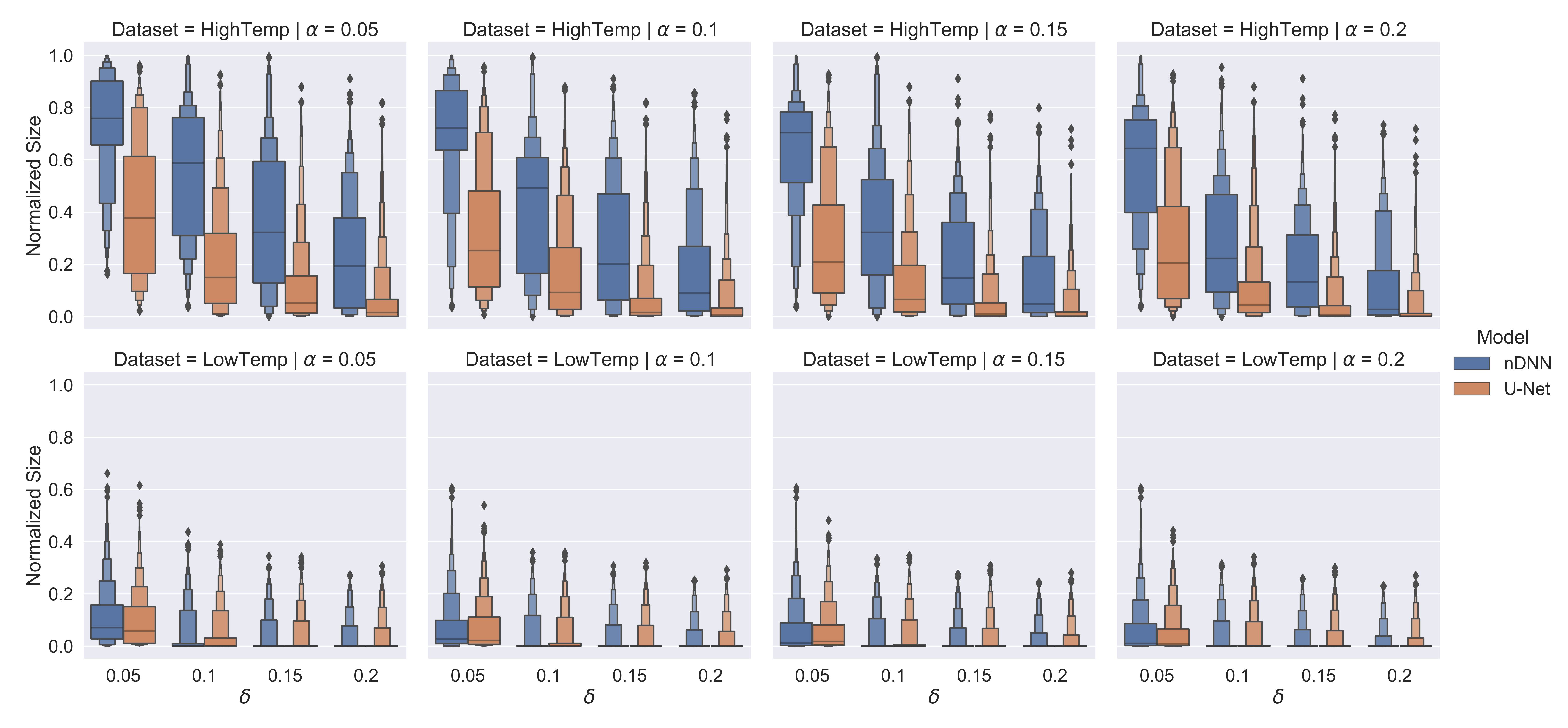}
\caption{Boxen plots for the distributions of normalized sizes of prediction sets vs. $\delta = 0.05, 0.1, 0.15, 0.2$ on test data for high-impact weather forecasting. The first row corresponds to HighTemp and the second row corresponds to LowTemp. Different columns represent different $\alpha$. U-Net performs better than nDNN, which indicates the importance of careful design of the underlying algorithm. } \end{figure*}

The remaining experiments apply CLCP to weather forecasting problems. Here we concentrate on postprocessing of the forecasts made by numerical weather prediction (NWP) models \cite{vannitsem2018statistical} \cite{vannitsem2021statistical}. NWP models use equations of atmospheric dynamics and estimations of current weather conditions to do weather forecasting, which is the mainstream weather forecasting technique nowadays especially for forecasting beyond $12$ hours. Many errors affect the performance of NWP models, such as the estimation errors of initial conditions and the approximation errors of NWP models, leading to the research topic about postprocessing the forecasts of NWP models. Most postprocessing methods are built on some learning process, which takes the forecasts of NWP models as inputs and the observations of weather elements or events as outputs.

In this paper, we use CLCP to postprocess the ensemble forecasts with the control forecast and $50$ perturbed forecasts issued by the NWP model from European Centre for Medium-Range Weather Forecasts (ECMWF) \cite{palmer2019ecmwf}, which are obtained from the THORPEX Interactive Grand Global Ensemble (TIGGE) dataset \cite{cisl_rda_ds330}. We focus on $2$-m maximum temperature and minimum temperature between the forecast lead times of $12$nd hour and $36$th hour with the forecasts initialized at $0000$ UTC. The forecast fields are grided with the resolution of $0.5^{\circ} \times 0.5^{\circ}$ and the corresponding label fields with the same resolution are extracted from the ERA5 reanalysis data \cite{hersbach2020era5}.
The area ranges from $109^{\circ}$E to $122^{\circ}$E in longitude and from $29^{\circ}$N to $42^{\circ}$N in latitude, covering the main parts of North China, East China and Central China, whose grid size is $27 \times 27$. The ECMWF forecast data and ERA5 reanalysis data are collected from $2007$ to $2020$ ($14$ years).

We first consider high-impact weather forecasting, which is to forecast whether a high-impact weather exists for each grid and can be seen as a point-wise classification problem or image segmentation problem for computer vision. The high-impact weather we consider is whether the $2$-m maximum temperature is above $\SI{35}{\degreeCelsius}$ or the $2$-m minimum temperature is below $\SI{-15}{\degreeCelsius}$ for each grid. These two cases are treated as high temperature weather or low temperature weather in China, which make meteorological observatories issue high temperature warning or low temperature warning respectively.

The prediction sets and the loss function used for high-impact weather forecasting are the same as those for image segmentation in \cite{angelopoulos2022conformal}.
Taking the ensemble forecast fields of the NWP model as input $x$, the corresponding label $y$ is a set of grids having high-impact weather, which can be seen as a segmentation problem for high-impact weather. Therefore, we first train a segmentation neural network $f(x)$, where $f_{(p,q)}(x)$ is the estimated probability of the grid $(p,q)$ having high-impact weather. Then the set-valued function $C_{\lambda} $ can be constructed as
\begin{equation}
C_{\lambda}(x) = \{(p,q): f_{(p,q)}(x) \geq 1 - \lambda\},
\end{equation}
and the loss function is
\begin{equation}
L(y, C) = 1 - \frac{|y \cap C|}{| y|},
\end{equation}
which measures the ratio of the prediction sets failing to do the warning. We use CLCP with the prediction set and the loss function above to do high temperature and low temperature forecasting respectively.

\subsubsection{Dataset for high temperature forecasting}

The reanalysis fields of $2$-m maximum temperature were collected from ERA5 and the label fields were calculated based on whether the $2$-m maximum temperature is above $\SI{35}{\degreeCelsius}$. To make the loss function take finite values, we only collected the data whose label fields have at least one high temperature grid to do this empirical study, which resulted in $1200$ samples in total, i.e., $1200$ ensemble forecasts from the NWP model of ECMWF and corresponding label fields calculated from ERA5. We name this dataset as HighTemp.

\subsubsection{Dataset for low temperature forecasting}

The dataset for testing CLCP for low temperature weather forecasting was constructed in a similar way.  The reanalysis fields of $2$-m minimum temperature were collected from ERA5 and the label fields were calculated based on whether the $2$-m minimum temperature is below $\SI{-15}{\degreeCelsius}$. We only collected the data whose label fields have at least one low temperature grid to do this empirical study, which resulted in $1233$ samples in total. We name this dataset as LowTemp.

For each dataset, the same process was used to conduct the experiment as Section IV-A ,  i.e., all forecasts from the NWP model were normalized to $[0,1]$ by min–max normalization, and we used $20\%$ of the data for testing and $80\%$ and $20\%$ of the remaining data for training and calibration respectively. We employed two fully convolutional neural networks \cite{li2021survey} for binary image segmentation as our underlying algorithms. One was U-Net \cite{ronneberger2015u} with the same structure as that in \cite{gronquist2021deep}, whose numbers of hidden feature maps were all set to $32$. The other was the naive deep neural network (nDNN) with the same encoder-decoder structure as the U-Net without skip-connections, i.e., the U-Net removing skip-connections. We use these two neural networks to show that the design of the underlying algorithm is necessary for better performance, as U-Net fuses multi-scale features and nDNN does not. The data for training U-Net and nDNN were further partitioned into the validation part ($10\%$) for model selection and proper training part ($90\%$) for updating the parameters. Adam optimization \cite{kingma2014adam} was used for training. The learning rate was set to $0.0001$ and the number of epochs was set to $50$. After training $50$ epochs, the model with lowest binary cross entropy on validation data was used for formula (10) to construct prediction sets, where $\lambda$ is searched from $1$ to $0$ with step size $0.01$. The experiments of using CLCP for the loss function as formula (11) were conducted $10$ times and the prediction results on test set are shown in Fig. 3, Fig. 4 and Fig. 5.

Fig. 3 also shows the bar plots of the frequencies of the prediction losses being greater than $\alpha$ for $\delta = 0.05, 0.1, 0.15$ and $0.2$. Four columns stand for the cases where $\alpha = 0.05, 0.1, 0.15$ and $0.2$ respectively. It can be seen that for the two datasets HighTemp and LowTemp, all bars are near or below the preset $\delta$, which verifies formula (7) empirically. Fig. 4 further shows the distributions of the losses for different $\delta$ and different $\alpha$ using boxen plots, which contain more information than box plots by drawing narrow boxes for tails. It can be seen that larger $\alpha$ and $\delta$ lead to larger losses, which is reasonable since large $\alpha$ and $\delta$ relax the constraint on prediction losses. We measure the informational efficiency of the prediction set $C_{\lambda^*}(x)$ using its normalized size defined as $|C_{\lambda^*}(x)| /PQ$, where $P$ and $Q$ are the numbers of the vertical and the horizontal grids respectively. The distributions of normalized sizes in Fig. 5 show that U-Net is more informationally efficient than nDNN, which indicates that design of the underlying algorithm is important for CLCP. Different $\alpha$ and $\delta$ lead to different normalized sizes, implying the trade-off among the preset loss level $\alpha$, confidence level $1-\delta$ and informational efficiency of the prediction sets. By choosing $\alpha$ and $\delta$ properly, the prediction sets of CLCP can have reasonable sizes. Also, we can see that forecasting low temperature is somehow easier than high temperature with the fact that for the same $\alpha$ and $\delta$, the normalized sizes of forecasting low temperature are distributed lower than the ones of forecasting high temperature, indicating the need of design of the underlying algorithms to improve performance for forecasting high temperature.

\subsection{CLCP for maximum temperature and minimum temperature forecasting}

\begin{figure*}[h]
\centering
\includegraphics[width = 0.75 \hsize]{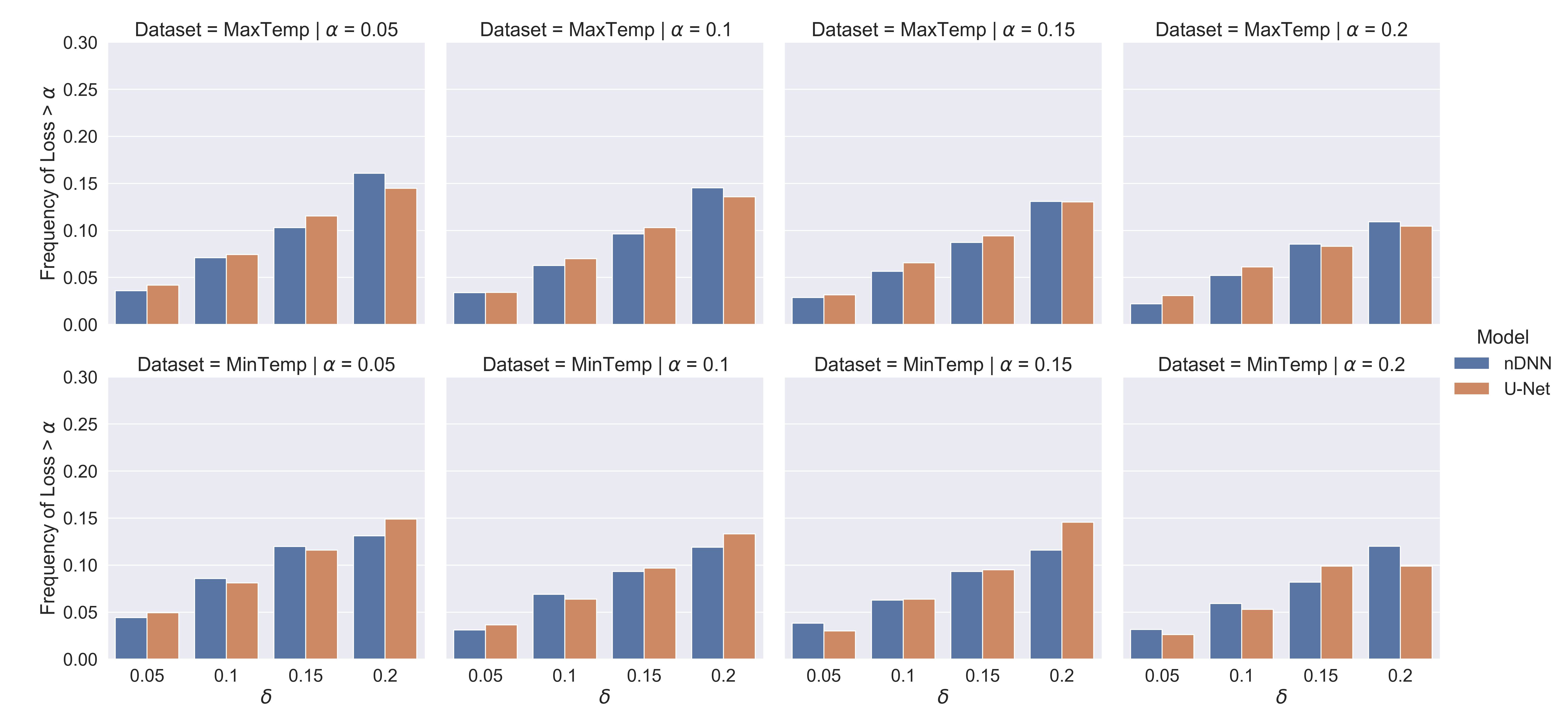}
\caption{Bar plots of the frequencies of the prediction losses being greater than $\alpha$ vs. $\delta = 0.05, 0.1, 0.15, 0.2$ on test data for maximum temperature and minimum temperature forecasting. The first row corresponds to MaxTemp and the second row corresponds to MinTemp. Different columns represent different $\alpha$. All bars are near or below the preset $\delta$, which confirms the controlling guarantee of CLCP empirically.}
\end{figure*}

\begin{figure*}[h]
\centering
\includegraphics[width = 0.75 \hsize]{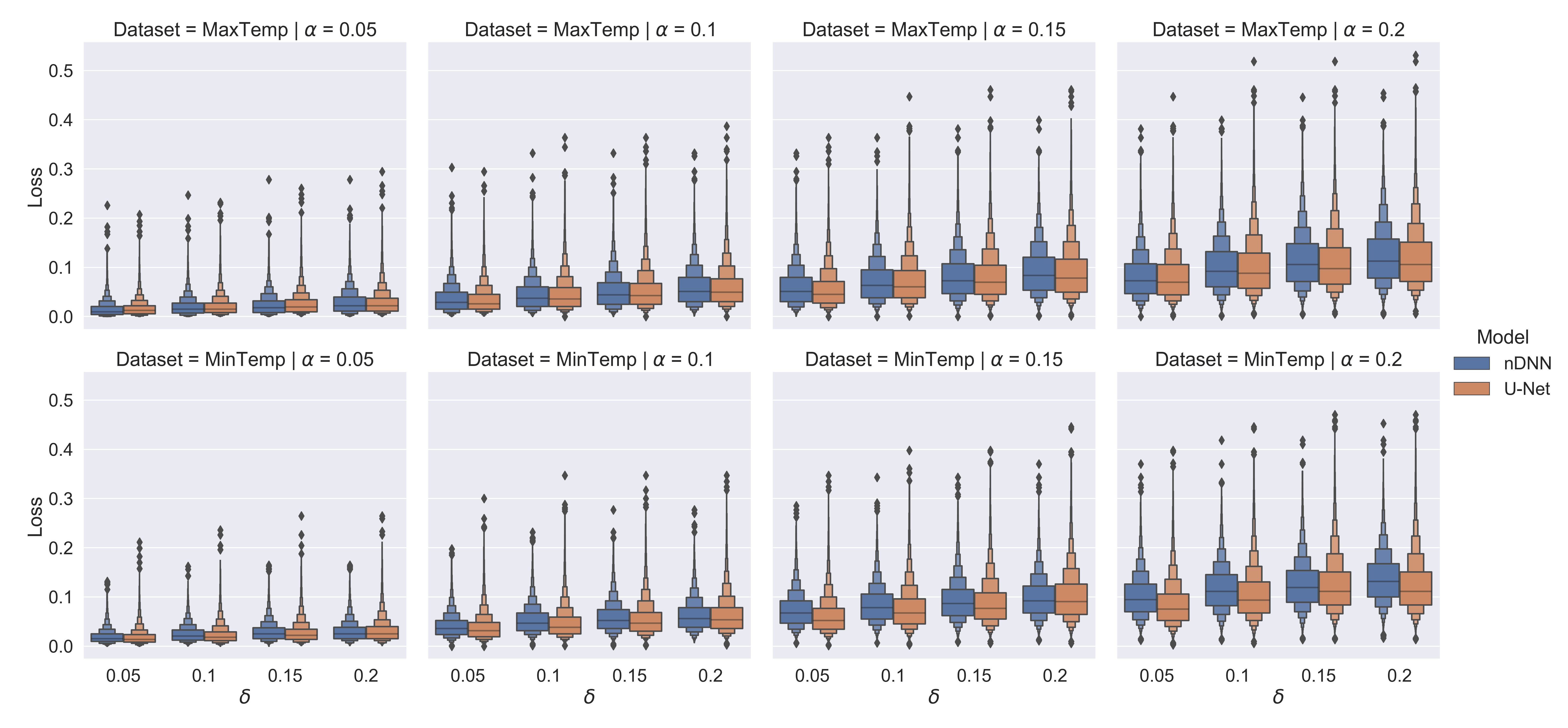}
\caption{Boxen plots of the prediction losses vs. $\delta = 0.05, 0.1, 0.15, 0.2$ on test data for maximum temperature and minimum temperature forecasting. The first row corresponds to MaxTemp and the second row corresponds to MinTemp. Different columns represent different $\alpha$. The loss distributions are controlled by $\alpha$ and $\delta$ properly to obtain the empirical validity in Fig. 6.}
\end{figure*}

\begin{figure*}[h]
\centering
\includegraphics[width = 0.75 \hsize]{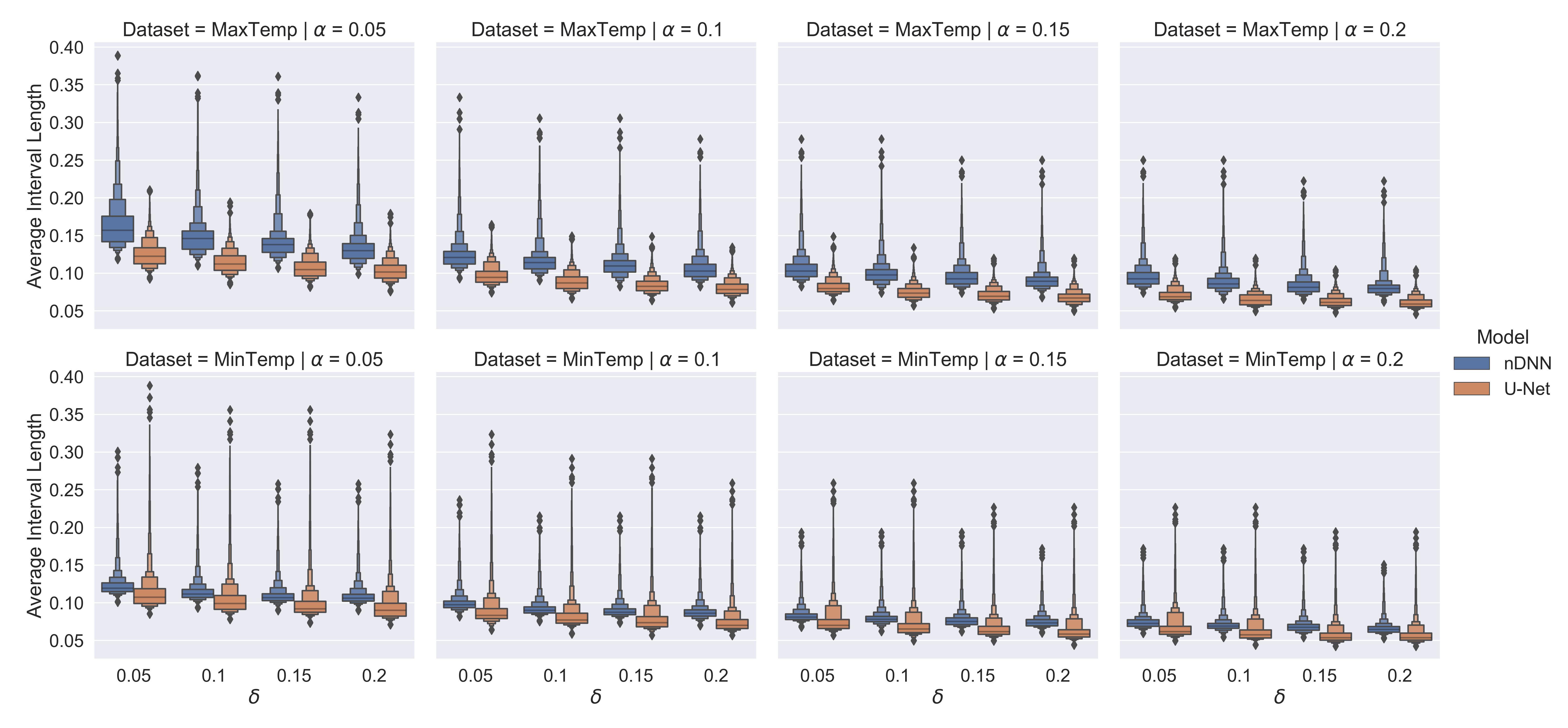}
\caption{Boxen plots for the distributions of average interval length vs. $\delta = 0.05, 0.1, 0.15, 0.2$ on test data for maximum temperature and minimum temperature forecasting. The first row corresponds to MaxTemp and the second row corresponds to MinTemp. Different columns represent different $\alpha$. U-Net performs better than nDNN, which indicates the importance of careful design of the underlying algorithm. }
\end{figure*}

This section focuses on using CLCP to forecast the $2$-m maximum temperature or minimum temperature value for each grid, which is a point-wise regression problem or image-to-image regression problem. To construct the prediction sets, we follow the procedure proposed in \cite{angelopoulos2022image} and train the neural network with $3$ output channels jointly predicting the point-wise $0.05$, $0.5$ and $0.95$ quantiles of the fields using quantile regression \cite{koenker1978regression} \cite{angelopoulos2022image}, which are denoted by $f^{0.05}(x)$, $f^{0.5}(x)$ and $f^{0.95}(x)$. Then the prediction set $C_{\lambda}(x)$  is equal to
\begin{equation}\nonumber
\Big \{y: y_{(p,q)} \in  [f^{0.5}_{(p,q)}(x) - \lambda \Delta^{-}_{(p,q)}(x), f^{0.5}_{(p,q)}(x) + \lambda \Delta^{+}_{(p,q)}(x) ] \Big \},
\end{equation}
where
\begin{equation}\nonumber
\Delta^{-}(x) = \max \{f^{0.5}(x) - f^{0.05}(x), 10^{-6}\},
\end{equation}
\begin{equation}\nonumber
\Delta^{+}(x) = \max \{f^{0.95}(x) - f^{0.5}(x), 10^{-6}\},
\end{equation}
and $\max$ is a point-wise operator making $\Delta^{-}$ and $\Delta^{+}$ at least $10^{-6}$. This prediction set is a prediction band for the output field, whose prediction interval at grid $(p,q)$ is
\begin{equation}\nonumber
[f^{0.5}_{(p,q)}(x) - \lambda \Delta^{-}_{(p,q)}(x), f^{0.5}_{(p,q)}(x) + \lambda \Delta^{+}_{(p,q)}(x) ]
\end{equation}
with the point-wise width being an increasing function of $\lambda$. This construction was proposed in \cite{angelopoulos2022image} for image-to-image regression and we use the same loss function in \cite{angelopoulos2022image} measuring miscoverage rate of a prediction band $C$ for a field $y$, which can be formalized as
\begin{equation}\nonumber
L(y, C) = \frac{1}{PQ}\Big | \Big \{(p,q): y_{(p,q)} \notin  C_{(p,q)} \Big \} \Big |,
\end{equation}
where $C_{(p,q)} $ is the prediction interval at grid $(p,q)$ for prediction band $C$.

All of the data collected from $2007$ to $2020$ were used, leading to $4945$ samples for each forecasting application and the datasets are named as MaxTemp and MinTemp respectively.
The experimental design is the same as that in Section IV-B, except that we also normalized the label for each grid to $[0,1]$ by min–max normalization, used quantile loss for model selection  and we searched for $\lambda^*$ with two steps. First we found two values $\lambda_1$ and $\lambda_2$ from $\{100, 10, 1, 0.1, 0.01, ...\}$ such that $Q_{1-\delta}^{(n)}(\lambda_1) \leq \alpha$ and $Q_{1-\delta}^{(n)}(\lambda_2) > \alpha$. Then we searched for $\lambda^*$ from $100$ values starting with $\lambda_1$ and ending with $\lambda_2$ using a common step size. The experimental results are recorded in Fig. 6, Fig 7 and Fig 8.

Although the set predictors and the loss function used in this section are different from those in Section IV-B, the experimental results and conclusions are similar. From Fig. 6, we can see that the frequencies of the prediction losses being greater than $\alpha$ are controlled by $\delta$, which also verifies formula (7) empirically. Larger $\alpha$ and $\delta$ lead to larger losses, which is shown in Fig. 7.
Here we use the following average interval length
\begin{equation}\nonumber
\frac{1}{PQ}\sum_{p = 1}^{P}\sum_{q = 1}^{Q}\lambda^* (\Delta^{+}_{(p,q)}(x)- \Delta^{-}_{(p,q)}(x))
\end{equation}
to measure the informational efficiency of the prediction set  $C_{\lambda^*}(x)$  and Fig. 8 also depicts the trade-off among the preset loss level $\alpha$, confidence level $1-\delta$ and informational efficiency of the prediction sets and indicates that better design of underlying algorithms leads to better performance.

\section{Conclusion}

This paper extends conformal prediction to the situation where the value of a loss function needs to be controlled, which is inspired by risk-controlling prediction sets and conformal risk control approaches. The loss-controlling guarantee is proved in theory with the assumption of exchangeability and is empirically verified for different kinds of applications including classification with a class-varying loss and weather forecasting. Different from conformal prediction, conformal loss-controlling prediction approach proposed in this paper has two preset parameters $\alpha$ and $\delta$, which guarantees that the prediction loss is not greater than $\alpha$ with confidence $1 - \delta$. Both parameters impose restrictions on prediction sets and should be set based on specific applications. Despite loss-controlling guarantee, informational efficiency of the prediction sets built by conformal loss-controlling prediction is highly related to underlying algorithms, which has been shown in empirical studies. Since this is a rather new topic, the underlying algorithms and the way of constructing set predictors are inherited from conformal risk control. This leaves the important question on how to build informationally efficient set predictors in an optimal way, which is one of our further researches in the future.




\ifCLASSOPTIONcaptionsoff
  \newpage
\fi




\bibliography{ref}

\bibliographystyle{IEEEtran}

\end{document}